\documentclass[letterpaper]{article} 
\usepackage[preprint]{aaai2027}    
\usepackage[hyphens]{url}            
\usepackage{graphicx}                
\usepackage{natbib}                  
\usepackage{caption}                 
\usepackage{algorithm}
\usepackage{algorithmic}

\usepackage{booktabs}
\usepackage{multirow}
\usepackage{amsmath,amssymb,amsthm}
\usepackage{xcolor}

\graphicspath{{/}}

\newcommand{\LegoLM}{\textsc{LegoLM}}
\newcommand{\legonet}{\textsc{LegoNet}}
\newtheorem{theorem}{Theorem}
\newtheorem{proposition}[theorem]{Proposition}

\title{LegoLM: Structured Weight Sharing for Large Language Models}
\author{Joseph Bingham}
\affiliations{Deparment of Biology, Technion -- Israel Institute of Technology \\ jbingham@campus.technion.ac.il}

\begin{document}
\maketitle

\begin{abstract}
We present \LegoLM{}, a structured weight-sharing compression framework for
large language models (LLMs) grounded in a systematic study of \emph{why}
global weight sharing fails and how to fix it.
We identify two distinct failure modes.
\emph{Distributional mismatch} (Proposition~\ref{prop:hetero}): for vector blocks of
dimension $d \geq 2$, transformer layers with heterogeneous weight scales impose a
scale-mismatch penalty that grows linearly with $d$ and cannot be resolved by
increasing $K$, producing perplexity in the millions.
\emph{Outlier dominance} (Theorem~\ref{thm:outlier}): for scalar blocks ($d{=}1$),
a fraction ${\approx}1/K$ of weights lies beyond the outermost Lloyd-Max decision
threshold and cannot be represented by any centroid; their misrepresentation
accumulates across layers, causing catastrophic quality loss. \LegoLM{} resolves both failure modes via three data-free adaptations:
(i)~scalar-block encoding to eliminate the $d$-linear mismatch component,
(ii)~percentile-selective replacement that identifies and preserves outlier weights
verbatim, and (iii)~boundary-layer protection for the first and last transformer blocks.
Across GPT-2 small (124M) and Mistral-7B (7.2B parameters),
\LegoLM{} achieves \textbf{+0.03\% PPL degradation at 4.41$\times$ compression on
Mistral-7B} --- outperforming PTQ-8bit in both quality and compression ratio ---
and \textbf{$-$0.02\% at 2.67$\times$} (lossless).
Downstream evaluation on LAMBADA and HellaSwag confirms that \LegoLM{} at
$K{=}64$, $p{=}99\%$ preserves accuracy within noise at \textbf{5.12$\times$ compression},
exceeding PTQ-8bit's compression ratio while matching its accuracy.
We further discover that outlier dominance \emph{grows with model scale}:
full replacement at $K{=}128$ degrades GPT-2 small by only +23\%
but catastrophically degrades Mistral-7B by +1{,}134{,}279\%,
while selective replacement at $p{=}99\%$ rescues both models to under +15\%.
A controlled ablation confirms that selective replacement is the dominant
mechanism: adding it to per-layer K-means also yields near-lossless quality
($+0.05\%$), matching \LegoLM{} within $0.02\%$. Code available at \url{https://anonymous.4open.science/r/legolm}.
\end{abstract}

\section{Introduction}

Large language models demand aggressive compression for practical deployment.
A 7-billion-parameter model at 32-bit precision occupies 28\,GB of storage;
even at 16-bit it requires 14\,GB, exceeding consumer GPU memory.
Post-training compression, applied after training, without access to gradients
or large datasets, is thus the dominant approach.

\emph{Quantization}~\citep{frantar2023gptq,dettmers2022llmint8,lin2024awq} rounds
each weight to a fixed numerical grid, typically 4 or 8 bits, achieving
4$\times$--8$\times$ compression.
\emph{Pruning}~\citep{frantar2023sparsegpt,sun2024wanda, bingham2026bonsai} zeroes unimportant weights.
\emph{Low-rank factorization}~\citep{denton2014exploiting,hsu2022iclr} approximates
weight matrices with compact products.
All three treat weights individually or layer-by-layer.

\emph{Weight sharing}~\citep{han2016deep} takes a qualitatively different approach:
multiple weights across the \emph{entire} model share a single learned centroid value,
stored only as an index into a global codebook.
Weight sharing was effective for convolutional networks~\citep{bingham2022legonet,han2016deep},
but applying it to LLMs has received no systematic study.
We discover that it fails for two independent, theoretically distinct reasons ---
and that fixing those reasons produces a compression method competitive with
the best existing approaches.

\textbf{Failure mode I: distributional mismatch.}
When weights are grouped into vector blocks ($d \geq 2$) and clustered globally,
the codebook must simultaneously serve transformer layers whose weight scales
differ by up to $3{\times}$.
A $d$-dimensional centroid trained on the pooled distribution is systematically
mis-scaled for individual layers, and this mismatch penalty grows linearly with $d$,
producing perplexity in the millions regardless of codebook size $K$.

\textbf{Failure mode II: outlier dominance.}
Scalar blocks ($d{=}1$) eliminate the $d$-linear mismatch component:
without multi-dimensional scale commitment, each weight is assigned
to its nearest scalar centroid freely, though global scalar centroids
still reflect pooled densities and may underserve layer-specific tails.
However, LLM weight distributions contain extreme-magnitude scalars
that no $K$-entry codebook can represent well when $K$ is small.
Replacing these \emph{outlier} weights with their nearest centroid
causes catastrophic quality loss disproportionate to their fraction.
Critically, this effect \emph{grows with model scale}: on GPT-2 small (124M),
$K{=}128$ full replacement causes only $+23\%$ PPL degradation;
on Mistral-7B (7.2B), the same operation causes $+1{,}134{,}279\%$.
Yet preserving just $1\%$ of outlier weights restores Mistral-7B
to $+14.24\%$ degradation at $9.74{\times}$ compression.

\LegoLM{} resolves both failure modes simultaneously.
Scalar encoding ($d{=}1$) eliminates the $d$-linear mismatch component: each weight
is independently assigned to its nearest centroid without multi-dimensional
scale commitment, substantially reducing cross-layer reconstruction error.
Percentile-selective replacement identifies outliers by their quantisation error
and preserves them verbatim --- a zero-data proxy equivalent to magnitude-based
outlier detection for small $K$ (Thm.~\ref{thm:outlier}).
Boundary-layer protection handles the disproportionate sensitivity of
the first and last transformer blocks.

\textbf{Contributions.}
\begin{itemize}
  \item Two formal failure modes for LLM weight sharing:
    cross-layer distributional mismatch (vector blocks, Prop.~\ref{prop:hetero})
    and outlier dominance (scalar blocks, Thm.~\ref{thm:outlier}),
    with theoretical characterisation and empirical validation (\S\ref{sec:theory}).
  \item The \LegoLM{} framework combining three data-free adaptations,
    with exact CR formulas accounting for selective replacement and layer skipping
    (\S\ref{sec:method}).
  \item Discovery that outlier dominance grows with model scale:
    a single percentage point of preserved outlier weights transforms a
    completely broken compressed model into a usable one, and this rescue
    mechanism is scale-invariant across a 60$\times$ parameter range (\S\ref{sec:analysis}).
  \item A controlled ablation showing that selective replacement is the
    dominant mechanism for data-free LLM compression: adding pct$=99\%$
    preservation to per-layer K-means matches \LegoLM{}'s quality within
    $0.02$ pp, while global codebooks provide practical efficiency gains
    ($128$ vs $4{,}096$ centroids stored) (\S\ref{sec:experiments}).
\end{itemize}

\begin{figure*}[t]
  \centering
  \includegraphics[width=.9\textwidth]{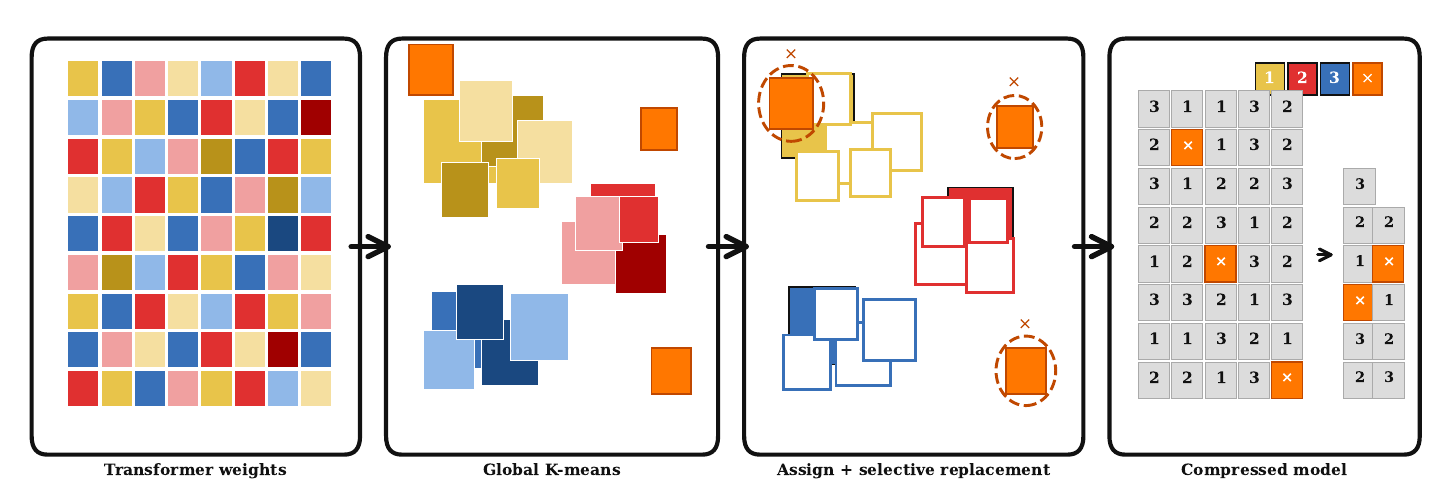}
  \caption{\textbf{\LegoLM{} compression pipeline.}
    Adapting the visual language of \citet{bingham2022legonet}.
    (\emph{1})~Transformer weight matrices are treated as pools of scalar values ($b{=}1$).
    (\emph{2})~A single global K-means codebook ($K{=}128$) is fitted across
    \emph{all} layers simultaneously --- unlike per-layer methods, one codebook
    covers the entire model.
    (\emph{3})~Each weight is assigned to its nearest centroid (white outlines);
    the $\sim$1\% of weights with the largest quantisation error --- outliers
    beyond $c_{\max}$ --- are circled and preserved verbatim (orange, $\times$).
    (\emph{4})~The compressed model stores a 7-bit index per weight plus the
    small codebook; outlier weights are stored as fp16 ($\times$).
    Result: $4.41{\times}$ compression, $+0.03\%$ PPL, data-free.}
  \label{fig:method_overview}
  \vspace{-2em}
\end{figure*}

\section{Background and Related Work}
\label{sec:related}

\textbf{Post-training quantization.}
GPTQ~\citep{frantar2023gptq} minimises per-layer quantization error with
second-order Hessian information at 4-bit precision.
LLM.int8()~\citep{dettmers2022llmint8} uses mixed-precision to handle activation
outliers.  AWQ~\citep{lin2024awq} scales weights by activation magnitude before
quantization.
These methods assign each weight a grid value from a fixed, layer-local codebook
and are orthogonal to \LegoLM{}'s learned global codebook approach.

\textbf{Pruning.}
SparseGPT~\citep{frantar2023sparsegpt} and Wanda~\citep{sun2024wanda} set individual
weights to zero without retraining; structured pruning~\citep{li2017pruning,xia2022structured}
removes entire rows or attention heads.
Our experiments confirm that unstructured magnitude pruning fails catastrophically
beyond 50\% sparsity on Mistral-7B ($+127{,}429\%$ at 70\% sparsity).

\textbf{Low-rank factorization.}
Truncated SVD~\citep{denton2014exploiting,hsu2022iclr} replaces weight matrices
with lower-rank products.
Our experiments reproduce the LASER effect~\citep{sharma2024truth}:
rank-25\% truncation worsens PPL compared to both rank-10\% and rank-50\%,
suggesting that intermediate singular components encode noise in GPT-2.
All SVD variants fail catastrophically on Mistral-7B.

\textbf{Weight sharing and clustering.}
Deep Compression~\citep{han2016deep} applies per-layer scalar K-means to
convolutional and fully-connected networks.
HashedNets~\citep{chen2015compressing} force sharing via hash collisions.
\citet{stock2020revisiting} apply Product Quantization to neural weights,
and \citet{egiazarian2024aqlm} extend PQ to 2-bit compression.
\legonet{}~\citep{bingham2022legonet} introduced global block clustering for CNNs.
None of these characterize the cross-layer distributional mismatch and outlier dominance failure
modes, nor propose the combined solution we present here.

\textbf{Codebook-based PTQ.}
VQRound~\citep{frantar2023gptq} uses vector-quantized rounding matrices to
stabilise low-bit PTQ, reducing extreme quantisation errors with a small
calibration budget.
Unlike \LegoLM{}, VQRound applies calibration-guided codebooks to
\emph{rounding parameters} rather than raw weights, targeting integer
arithmetic rather than storage compression.
The two approaches are complementary: \LegoLM{}'s data-free outlier
preservation could in principle layer on top of calibrated rounding
methods for extreme compression regimes, a direction we leave to future work.

\section{Two Failure Modes of LLM Weight Sharing}
\label{sec:theory}

\subsection{Setup}

Let $W \in \mathbb{R}^{R \times C}$ be a weight matrix in a transformer layer.
Following \citet{bingham2022legonet}, a \emph{block} is a contiguous window of
$d = b^2$ elements from $\operatorname{vec}(W)$, giving $N = \lfloor RC/d \rfloor$ blocks.
Weight sharing replaces each block $\mathbf{b}_i \in \mathbb{R}^d$ with its nearest
centroid from a learned codebook $\mathcal{C} = \{\mathbf{c}_1, \ldots, \mathbf{c}_K\}$.
LLM weights are approximately $\mathcal{N}(0, \sigma^2)$ with
$\sigma \approx 0.015$--$0.02$ per element~\citep{radford2019language,touvron2023llama}.

\subsection{Failure Mode I: Cross-layer Distributional Mismatch}

Unlike CNN weights---which are approximately homogeneous within a single
layer~\citep{han2016deep}---transformer weight matrices have widely varying
scales \emph{across} layers.
In Mistral-7B, per-layer weight standard deviations span a $3{\times}$ range
($\sigma_{\min}\approx0.007$ in late MLP layers, $\sigma_{\max}\approx0.020$
in early attention projections).

\begin{proposition}[Cross-layer Block Heterogeneity]
\label{prop:hetero}
Let transformer layers $\{\ell\}$ have per-layer weight distributions
$\mathcal{N}(0,\sigma_\ell^2)$ with pooled mean $\bar\sigma^2
= \mathbb{E}_\ell[\sigma_\ell^2]$.
A global $d$-dimensional codebook of size $K$, optimised for the pooled
distribution, places centroids at scale $\bar\sigma$.
Blocks from layer $\ell$ have expected L2 norm $\approx\sigma_\ell\sqrt{d}$,
whereas the nearest codebook centroid has norm $\approx\bar\sigma\sqrt{d}$.
The resulting scale-mismatch distortion per block is at least
\begin{equation}
\Delta D_\ell \;\gtrsim\; d\,(\sigma_\ell - \bar\sigma)^2,
\label{eq:mismatch}
\end{equation}
growing \emph{linearly with block dimension} $d$ and \emph{independently of} $K$.
For $d{=}1$, the codebook spans a range of scalar values and can serve any
weight magnitude without committing to a fixed scale, reducing $\Delta D_\ell$
to zero.
\end{proposition}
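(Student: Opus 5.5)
The argument rests on a norm-mismatch lower bound, the reverse triangle inequality, combined with concentration of Gaussian norms in dimension $d$. Fix a layer $\ell$ and a block $\mathbf{b}\sim\mathcal{N}(0,\sigma_\ell^2 I_d)$. For any centroid $\mathbf{c}$ we have $\|\mathbf{b}-\mathbf{c}\|\ge \bigl|\|\mathbf{b}\|-\|\mathbf{c}\|\bigr|$. Squaring and taking the minimum over the codebook gives
\[
\min_k \|\mathbf{b}-\mathbf{c}_k\|^2 \;\ge\; \min_k \bigl(\|\mathbf{b}\|-\|\mathbf{c}_k\|\bigr)^2 .
\]
This inequality holds for every $K$, which is where the $K$-independence will come from. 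The task is therefore to control the distribution of $\|\mathbf{b}\|$ on one side and the set of centroid norms $\{\|\mathbf{c}_k\|\}$ on the other.

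\textbf{Block norms.} The norm $\|\mathbf{b}\|/\sigma_\ell$ is $\chi_d$-distributed. Its mean is $\sqrt{d}-O(1/\sqrt d)$ and its variance is $O(1)$. By Gaussian Lipschitz concentration, $\|\mathbf{b}\|=\sigma_\ell\sqrt d+O(\sigma_\ell)$ with high probability.

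\textbf{Centroid norms.} A Lloyd-optimal codebook for the pooled distribution has each centroid equal to the conditional mean of its Voronoi cell. In the regime where the pooled blocks themselves concentrate on a shell, centroids then have norms $\approx\bar\sigma\sqrt d$. This is the "places centroids at scale $\bar\sigma$" hypothesis of the statement, and I would take it as an assumption rather than derive it. Strictly, a pooled Gaussian mixture concentrates on several shells of radii $\sigma_\ell\sqrt d$, so the assumption is needed to rule out the codebook allocating centroids to every shell.

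\textbf{Combining.} With $\|\mathbf{c}_k\|\approx\bar\sigma\sqrt d$ for all $k$, the displayed bound gives, with high probability,
\[
\bigl(\sigma_\ell\sqrt d-\bar\sigma\sqrt d-O(\sigma_\ell+\bar\sigma)\bigr)^2 .
\]
Taking expectations yields $\Delta D_\ell\gtrsim d(\sigma_\ell-\bar\sigma)^2-O\bigl(\sqrt d\,|\sigma_\ell-\bar\sigma|(\sigma_\ell+\bar\sigma)\bigr)$. The leading term is exactly \eqref{eq:mismatch}, linear in $d$ and independent of $K$. The $\gtrsim$ absorbs the lower-order fluctuation terms. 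I would state this as an asymptotic in $d$, or as holding whenever $\sqrt d\,|\sigma_\ell-\bar\sigma|\gg\sigma_\ell$.

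\textbf{The case $d=1$.} Here no centroid-norm concentration occurs. Scalar centroids spread over the whole real line at quantiles of the pooled density, so the norm argument gives no positive lower bound, and the mismatch term $d(\sigma_\ell-\bar\sigma)^2$ loses its meaning. Concretely, the distortion for each layer tends to $0$ as $K\to\infty$. "Reducing $\Delta D_\ell$ to zero" should be read as saying that this $K$-independent floor vanishes, not that the quantization error itself is zero.

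\textbf{Main obstacle.} The weak point is the centroid-norm step. For general $K$ an optimal codebook could place centroids on each shell $\sigma_\ell\sqrt d$, which would defeat the bound. I would first try to prove the claim rigorously in the regime where $K$ is small relative to the number of distinct scales, or where the codebook is constrained to a single shell. Failing that, I would present it explicitly as a consequence of the modeling assumption stated in the proposition.
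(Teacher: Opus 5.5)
Your argument is correct at the level of rigour of the statement: like the paper, you take the centroid-norm hypothesis as given. It does, however, take a different route.

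The paper uses the bias--variance split $\mathbb{E}\|\mathbf{b}-\mathbf{c}\|^2=\mathbb{E}\|\mathbf{b}-\mathbf{c}^*\|^2+\|\mathbf{c}^*-\mathbf{c}\|^2$ around a layer-optimal centroid $\mathbf{c}^*$. It then attributes $d(\sigma_\ell-\bar\sigma)^2$ to the bias term, as a per-coordinate scale offset summed over $d$ coordinates. You instead lower-bound the error by its radial part through the reverse triangle inequality, and pin down $\|\mathbf{b}\|$ via $\chi_d$ concentration.

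Your route gives a genuine inequality for any codebook with the stated centroid norms. It needs no Lloyd optimality, no cell geometry, and no definition of $\mathbf{c}^*$, and it shows that the $K$-independence comes from the norm hypothesis alone. The concentration step is more than you need: for a common norm $r$, Jensen gives $\mathbb{E}(\|\mathbf{b}\|-r)^2\ge(\sigma_\ell\,\mathbb{E}\chi_d-r)^2$ directly.

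Your route also avoids a soft spot in the paper's sketch. Read literally, $\mathbf{c}^*=\mathbb{E}[\mathbf{b}]_\ell=0$, so the bias is $\|\mathbf{c}\|^2\approx d\bar\sigma^2$, not $d(\sigma_\ell-\bar\sigma)^2$. To get $(\sigma_\ell-\bar\sigma)^2$, one must apply the split cell by cell and use that equal-norm centroids have conic Voronoi cells. Then the layer-$\ell$ conditional mean of cell $k$ is $(\sigma_\ell/\bar\sigma)\mathbf{c}_k$, and the bias is $(\sigma_\ell/\bar\sigma-1)^2\|\mathbf{c}_k\|^2$.

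In return, that cell-wise version measures a genuine \emph{excess} over re-fitted centroids and vanishes exactly at $\sigma_\ell=\bar\sigma$. Your bound controls total radial error, and reads as a mismatch penalty only because the centroid norm is pinned at exactly $\bar\sigma\sqrt d$.

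The distinction is real, because your heuristic justification of that hypothesis is off. Conditional means of angularly wide cells sit well inside the shell. For a Lloyd codebook on a Gaussian source, $\mathbb{E}\|\mathbf{c}\|^2=d\bar\sigma^2-D$, and the distortion-rate bound $D\ge d\bar\sigma^2K^{-2/d}$ caps this at about $0.4\,d\bar\sigma^2$ for $K=64$, $d=16$. With norms $\rho\bar\sigma\sqrt d$, $\rho<1$, your bound becomes roughly $d(\sigma_\ell-\rho\bar\sigma)^2$, which is nonzero even with no mismatch. The cell-wise bias becomes $\rho^2 d(\sigma_\ell-\bar\sigma)^2$, still a pure mismatch term.

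So keeping the norm condition as an explicit assumption, as you propose, is the right call. Your caveat that an optimal pooled codebook would eventually populate every shell is accurate. Your reading of the $d=1$ clause is also accurate and agrees with the paper's remark that only the $d$-linear term vanishes there.
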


\begin{proof}[Proof sketch]
The minimum MSE incurred when approximating a block $\mathbf{b}\sim\mathcal{N}(0,
\sigma_\ell^2\mathbf{I}_d)$ with the globally-optimised centroid $\mathbf{c}$
designed for $\mathcal{N}(0,\bar\sigma^2\mathbf{I}_d)$ decomposes as
$\mathbb{E}[\|\mathbf{b}-\mathbf{c}\|^2] = \mathbb{E}[\|\mathbf{b}-\mathbf{c}^*\|^2]
+ \|\mathbf{c}^*-\mathbf{c}\|^2$,
where $\mathbf{c}^* = \mathbb{E}[\mathbf{b}]_\ell$ is the layer-optimal centroid.
The bias term $\|\mathbf{c}^*-\mathbf{c}\|^2$ reflects the systematic scale offset
$(\sigma_\ell - \bar\sigma)$ across all $d$ independent coordinates,
yielding Eq.~(\ref{eq:mismatch}).
For $d{=}1$, the $d$-linear bias term vanishes; residual mismatch from
pooled-density centering is independent of $d$ and typically small.
\end{proof}

Table~\ref{tab:collapse_vs_k} provides empirical validation:
increasing $K$ does not rescue vector blocks ($d{=}16$) on GPT-2 small,
consistent with Eq.~(\ref{eq:mismatch})'s prediction that the mismatch
penalty is independent of codebook size.

\begin{table}[h]
\centering
\small
\caption{Distributional mismatch validation ($b{=}4$, $d{=}16$, GPT-2 small).
  Failure persists across all $K$ values, consistent with a mismatch penalty
  that is independent of codebook size (Eq.~\ref{eq:mismatch}).}
\label{tab:collapse_vs_k}
\begin{tabular}{rcc}
\toprule
$K$ & $\sqrt{dK/N}$ (theory) & PPL \\
\midrule
64    & 0.35\%  & 794{,}584 \\
256   & 0.69\%  & 179{,}280 \\
4{,}096 & 2.78\% & 30{,}863 \\
\midrule
Baseline & 100\% & 24.35 \\
\bottomrule
\end{tabular}
\end{table}

\begin{proposition}[Scalar Stability]
\label{prop:scalar_stable}
For $d{=}1$ and any continuous $f(w)$, K-means centroids converge to the $K$
conditional means of the optimal Lloyd-Max partition, which span the full support of $f$
and do not converge to zero.
\end{proposition}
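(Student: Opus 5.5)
We will interpret the statement as a claim about the population version of scalar K-means: given a continuous density $f$ on $\mathbb{R}$ with finite second moment, $K$-means with $K$ centroids (run on i.i.d.\ samples, or directly on $f$) converges to a stationary point of the distortion $D(c_1,\dots,c_K)=\int \min_k (w-c_k)^2 f(w)\,dw$. The plan has three steps. First, show that any fixed point of the scalar algorithm is a Lloyd-Max configuration. Second, show that such a configuration is non-degenerate and reaches into the tails of $f$. Third, transfer the conclusion from the population objective to the empirical K-means run on weights.

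\textbf{Step 1: fixed points are Lloyd-Max partitions.} In one dimension the nearest-centroid cells are intervals. With ordered centroids $c_1<\dots<c_K$, the thresholds are the midpoints $t_k=(c_k+c_{k+1})/2$, with $t_0=-\infty$ and $t_K=+\infty$. The update step replaces $c_k$ by $\mathbb{E}[w\mid t_{k-1}<w\le t_k]$. Each iteration does not increase $D$, and $D\ge 0$. Both assignment and update are continuous on the set of configurations with distinct centroids, because $f$ is continuous, so cell masses vary continuously. A standard Zangwill-type argument then shows that limit points satisfy both conditions simultaneously. These are exactly the Lloyd-Max necessary conditions: midpoint thresholds and conditional-mean centroids. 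For empirical K-means on $n$ weights drawn from $f$, we will invoke Pollard's strong consistency theorem: optimal empirical codebooks converge almost surely to optimal population codebooks. This gives the "converge to" part for the globally optimal partition.

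\textbf{Step 2: the optimal partition spans the support and does not collapse.} First we show the $K$ centroids are distinct and every cell has positive mass. If two centroids coincided, or a cell were empty, we could split a cell of positive variance at its mean into two conditional means. Whenever $f$ has at least $K$ points of support, this strictly lowers $D$. A continuous density has uncountable support, so the condition holds, and the optimum therefore has $K$ distinct centroids. The outer cells are $(-\infty,t_1]$ and $(t_{K-1},\infty)$. Hence $c_1=\mathbb{E}[w\mid w\le t_1]$ and $c_K=\mathbb{E}[w\mid w>t_{K-1}]$, which lie in the lower and upper tails of $f$ respectively. The cells partition all of $\mathbb{R}$, so every point of $\operatorname{supp} f$ lies in some cell. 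Its centroid is within that cell's conditional range, and $[c_1,c_K]$ together with the outer cells covers the support. Finally, "do not converge to zero" means not all centroids collapse to $0$ or to the mean. Since $c_1<\mathbb{E}[w]<c_K$ strictly, with positive spread bounded below by the $K=2$ optimum, the codebook has nonzero range. For the Gaussian case of the setup we can be quantitative: the spread is of order $\sigma$, for example $c_K\approx\sigma\sqrt{2\ln K}$ asymptotically.

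\textbf{Main obstacle.} The delicate point is that Lloyd iterations converge only to \emph{some} Lloyd-Max stationary point, which need not be optimal; for Gaussian $f$ the fixed point is unique because the density is log-concave (Fleischer/Trushkin). So we will state the result for log-concave $f$, which covers $\mathcal{N}(0,\sigma^2)$, via Trushkin's uniqueness theorem. For general continuous $f$ we will fall back on the weaker claim, which Step 2 still proves for every fixed point: every stationary configuration reached from distinct initial centroids has distinct, nonzero-spread centroids whose cells cover the support.
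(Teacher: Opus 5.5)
The paper states Proposition~\ref{prop:scalar_stable} with no proof at all, so your argument has to stand on its own. You are right that the statement is false as written for arbitrary continuous $f$. Lloyd stationary points need not be unique or optimal, and a finite second moment is needed. Restricting to log-concave $f$ via Fleischer/Trushkin uniqueness is the correct repair.

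\textbf{The genuine gap is the transfer to K-means run on the weights.} Pollard's theorem concerns \emph{global minimisers} of the empirical distortion. It says nothing about where Lloyd iterations on the sample end up, let alone the MiniBatch variant the paper actually uses. The empirical distribution is discrete and not log-concave, so it need not have a unique Lloyd fixed point. The very obstacle you resolved for the population therefore reappears, unresolved, for the sample.

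As written, Step~1 proves two separate facts: population Lloyd reaches the population optimum, and the empirical optimum converges to it. It does not prove the one you claim, namely that the empirical K-means output converges to the population optimum. To close this, show that empirical fixed points whose cells all have mass at least some $\epsilon>0$ can only accumulate at population fixed points. On such configurations the empirical Lloyd map converges uniformly to the population one, by Glivenko--Cantelli over intervals applied to the cell masses and to $\int w\,\mathbf{1}_I$ with integrable envelope $|w|$. Then invoke Trushkin's uniqueness, and separately exclude empirical fixed points with a vanishing-mass tail cell. Alternatively, restrict the claim to the population algorithm, or to the exact empirical optimum, which is computable in one dimension by dynamic programming.

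Two further steps are asserted rather than proved. First, the Zangwill argument needs the iterates to stay in a compact set on which the Lloyd map is closed: no two centroids merging, no cell mass tending to zero, no outer centroid escaping to infinity. None of this is checked. For the Gaussian, Kieffer's convergence theorem for Lloyd's Method~I does this work and is the cleaner citation.

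Second, your fallback for general $f$ says Step~2 applies ``to every fixed point''. The splitting argument there only shows that coincident centroids or empty cells are \emph{suboptimal}, not that they are non-stationary. The right argument for fixed points is different. If every cell has positive mass, then $c_k<t_k<c_{k+1}$, because conditional means of adjacent intervals straddle their common endpoint when $f$ has no atoms. This gives distinctness and $c_1<\mathbb{E}[w]<c_K$ directly. Positive cell mass is automatic for distinct centroids when $f>0$ everywhere, but can fail when $f$ vanishes on an interval.

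Smaller points:
\begin{itemize}
  \item ``Spread bounded below by the $K{=}2$ optimum'' is unproven and unnecessary, since distinctness already gives $c_1<c_K$.
  \item The outermost Lloyd--Max point grows like $\sigma\sqrt{6\ln K}$ (point density $\propto f^{1/3}$), not $\sigma\sqrt{2\ln K}$.
  \item ``Span the full support'' holds only in the trivial sense that the cells partition $\mathbb{R}$. Centroids always lie strictly inside the convex hull of the support, and for Gaussian weights there is always mass beyond $c_K$, which is exactly the outlier population of Theorem~\ref{thm:outlier}. Say explicitly that the strong reading is false rather than adopting the weak one silently.
\end{itemize}
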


\subsection{Failure Mode II: Outlier Dominance}

\begin{theorem}[Outlier Dominance]
\label{thm:outlier}
For scalar blocks ($d{=}1$) from $\mathcal{N}(0, \sigma^2)$, the optimal
Lloyd-Max quantizer with $K$ levels partitions $\mathbb{R}$ into $K$ decision
cells separated by \emph{thresholds} $t_0{<}t_1{<}\cdots{<}t_K$, with centroids
$c_k = \mathbb{E}[W \mid t_{k-1} < W \leq t_k]$.
The outermost decision \emph{threshold} satisfies
\begin{equation}
t_{K-1} \;\approx\; \Phi^{-1}\!\!\left(1 - \tfrac{1}{2K}\right)\sigma,
\end{equation}
so the fraction of \emph{outlier weights} beyond this boundary is:
\begin{equation}
f_{\mathrm{out}}(K) = 2\,\Phi\!\left(-t_{K-1}/\sigma\right) \approx \frac{1}{K}.
\label{eq:fout}
\end{equation}
The outermost \emph{centroid} $c_{\max} = \mathbb{E}[|W|\mid|W|>t_{K-1}]>t_{K-1}$
lies strictly beyond the threshold.
Outlier weights are assigned to $c_{\max}$, incurring reconstruction error
$|w - c_{\max}|$ that is unbounded as $|w|\to\infty$.
For $K{=}8$: $t_{K-1}\approx 1.5\sigma$ and $f_{\mathrm{out}}\approx 12\%$;
accumulated across all layers, the resulting error causes catastrophic PPL
degradation (Table~\ref{tab:outlier_rescue}).
\end{theorem}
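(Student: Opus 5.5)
The plan is to separate the statement into its exact structural parts and its approximate quantitative parts, and to prove them by different means. The structural parts are the Lloyd-Max form of the quantizer, the strict inequality $c_{\max}>t_{K-1}$, and the unboundedness of the outlier error. The quantitative parts are the location of $t_{K-1}$ and $f_{\mathrm{out}}\approx 1/K$.

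For the structural part, I start from the two classical necessary conditions for an MSE-optimal scalar quantizer of a continuous density. These are the same conditions that underlie Proposition~\ref{prop:scalar_stable}. The centroid condition gives $c_k=\mathbb{E}[W\mid t_{k-1}<W\le t_k]$. The nearest-neighbour condition gives the midpoint rule $t_k=(c_k+c_{k+1})/2$, with $t_0=-\infty$ and $t_K=+\infty$. Because $\mathcal{N}(0,\sigma^2)$ is symmetric and log-concave, the optimal quantizer is unique (Fleischer's condition) and hence symmetric, so $t_1=-t_{K-1}$. The two outermost cells are therefore the tails $\{|W|>t_{K-1}\}$, and their total mass is $2\Phi(-t_{K-1}/\sigma)$; this is the first equality in Eq.~(\ref{eq:fout}). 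The outermost centroid is the truncated-normal mean $c_{\max}=\sigma\,\phi(\tau)/\Phi(-\tau)$ with $\tau=t_{K-1}/\sigma$. The inequality $c_{\max}>t_{K-1}$ is then exactly the Mills-ratio bound $\phi(\tau)>\tau\,\Phi(-\tau)$, valid for all $\tau\in\mathbb{R}$. Alternatively it follows at once because the conditional mean of a density supported on $(t_{K-1},\infty)$ lies strictly inside that interval. Unboundedness is immediate: a weight $w>t_{K-1}$ is mapped to the fixed value $c_{\max}$, so the error $|w-c_{\max}|\to\infty$ as $w\to\infty$.

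For the quantitative part, I justify $t_{K-1}\approx\Phi^{-1}(1-\tfrac{1}{2K})\sigma$ as the equal-mass approximation: if each of the $K$ cells carried probability $1/K$, the two tail cells would jointly carry $2/K$. Setting each tail to $1/(2K)$ gives the stated threshold, and then $f_{\mathrm{out}}=2\Phi(-t_{K-1}/\sigma)$. The main obstacle is this step, because the true Lloyd-Max cells are not equal-mass. High-resolution (Panter--Dite) theory gives cell widths proportional to $f^{-1/3}$ and cell masses proportional to $f^{2/3}$, so the tail cells are relatively thinner than average. The exact optimal $t_{K-1}$ is therefore larger than the equal-mass value, and $f_{\mathrm{out}}$ is somewhat below the equal-mass value. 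For $K=8$ the tabulated Max thresholds give about $1.75\sigma$ and roughly $8\%$. With equal masses of $1/K$, $f_{\mathrm{out}}$ would equal $2/K$ rather than $1/K$. The $1/K$ figure is instead what results from the threshold $\Phi^{-1}(1-\tfrac{1}{2K})\sigma$ under the per-tail reading above.

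I therefore intend to state the $\approx$ in Eq.~(\ref{eq:fout}) as an order-of-magnitude relation, $f_{\mathrm{out}}(K)=\Theta(1/K)$ up to the slowly varying Panter--Dite correction in the $K$ range of interest. I will support it by two checks:
\begin{itemize}
\item a lower bound: the tail cells cannot have mass below that implied by the mean-squared-error balance;
\item a numerical comparison against Max's tables for $K\in\{8,64,128\}$.
\end{itemize}
I would say explicitly that the $K=8$ values $1.5\sigma$ and $12\%$ come from the equal-mass formula, $\Phi^{-1}(15/16)\approx1.53$.

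The final claim about accumulated error causing catastrophic perplexity is empirical. I will not prove it, but will defer it to Table~\ref{tab:outlier_rescue}.
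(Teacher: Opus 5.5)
\paragraph{Verdict.} The structural half of your proposal is correct, and it gives more than the paper does: the paper states the theorem and then only points to Table~\ref{tab:outlier_rescue}, with no derivation. The following steps all go through:
\begin{itemize}
\item the centroid and midpoint conditions;
\item symmetry via uniqueness for the log-concave Gaussian;
\item $f_{\mathrm{out}}=2\Phi(-t_{K-1}/\sigma)$;
\item $c_{\max}=\sigma\phi(\tau)/\Phi(-\tau)>t_{K-1}$;
\item unboundedness of $|w-c_{\max}|$.
\end{itemize}
The gap is in the quantitative half. You plan to prove $f_{\mathrm{out}}(K)=\Theta(1/K)$, backed by a lower bound on the tail mass from ``MSE balance''. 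This is false for the actual Lloyd-Max quantizer, so no argument can deliver it.

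\paragraph{Why $\Theta(1/K)$ fails.} You already saw that at $K=8$ the true values are $t_7\approx1.748\sigma$ and $f_{\mathrm{out}}\approx8.0\%$, i.e.\ $Kf_{\mathrm{out}}\approx0.64$. This is not constant-factor slack; it widens with $K$. At $K=16$, Max's values give $t_{15}\approx2.401\sigma$ and $f_{\mathrm{out}}\approx1.6\%$, so $Kf_{\mathrm{out}}\approx0.26$, against $1/K=6.25\%$.

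MSE balance in fact yields an \emph{upper} bound. Each tail cell reconstructs at its conditional mean, so the two tails contribute exactly $f_{\mathrm{out}}\,\mathrm{Var}(W\mid W>t_{K-1})$ to the distortion. That cannot exceed the optimal distortion $D_K\sim(\sqrt{3}\pi/2)\sigma^2/K^2$. Suppose $f_{\mathrm{out}}\ge c/K$. Then $\tau=t_{K-1}/\sigma\le\sqrt{2\ln(K/c)}$. The truncated-normal variance is decreasing in $\tau$ and behaves like $\sigma^2/\tau^2$, so it is $\gtrsim\sigma^2/\ln K$. The tail contribution would then be $\gtrsim\sigma^2/(K\ln K)\gg D_K$, a contradiction. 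Hence $f_{\mathrm{out}}(K)=O(\ln K/K^2)=o(1/K)$.

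Your Panter--Dite remark points the same way once followed through. Cell mass $\propto f^{2/3}$ is not a slowly varying correction in the tail. Heuristically the outer threshold sits near $\sqrt{3}\,\sigma\,\Phi^{-1}(1-1/K)\approx\sigma\sqrt{6\ln K}$, so $f_{\mathrm{out}}$ decays roughly like $K^{-3}$ up to logarithmic factors. Also, Max's tables stop at $36$ levels, so $K=64,128$ would have to be computed. That comparison would refute $1/K$, not support it.

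\paragraph{The formula is mislabelled.} $\Phi^{-1}(1-\tfrac{1}{2K})\sigma$ is not the equal-mass threshold. Equal masses put $1/K$ in each tail, giving $t_{K-1}=\Phi^{-1}(1-\tfrac1K)\sigma$ (about $1.15\sigma$ for $K=8$) and $f_{\mathrm{out}}=2/K$. The stated formula assigns each tail half the average mass, and then $2\Phi(-t_{K-1}/\sigma)=1/K$ holds by construction. So the $K=8$ figures $1.5\sigma$ and $12\%$ are just $\Phi^{-1}(15/16)\approx1.53$ restated, not a property of the optimal quantizer.

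\paragraph{What your plan can deliver.} You can prove the structural claims exactly. The statements $t_{K-1}\approx\Phi^{-1}(1-\tfrac{1}{2K})\sigma$ and $f_{\mathrm{out}}\approx1/K$ must then be presented as an ad hoc heuristic that the true Lloyd-Max quantizer does not satisfy, with the discrepancy growing in $K$. Weakening ``$\approx$'' to ``$\Theta$'' does not rescue it. The paper's own later figures, $f_{\mathrm{out}}(8)\approx3\%$ and $f_{\mathrm{out}}(16)\approx1.5\%$ in the phase-transition discussion, are already inconsistent with its $1/K$ formula.
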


\textbf{Empirical validation.}
Table~\ref{tab:outlier_rescue} shows the phase transition on Mistral-7B:
$K{=}8$ full replacement gives $+261{,}205\%$ PPL degradation, but preserving
the $1\%$ of weights with largest quantisation error reduces this to $+14.24\%$.
The mechanism: the $1\%$ kept-exact weights are the outliers beyond $c_{\max}$;
once preserved, the remaining 99\% are well-covered by the 8 centroids.

\begin{table}[h]
\centering
\small
\caption{Outlier dominance at $K{=}8$ (Mistral-7B, WikiText-2).
  Preserving the $f\%$ of weights with largest quantisation error
  rescues the model from catastrophic failure.
  Quality is \emph{non-monotone} in $f$: the optimal operating point
  is $f{=}1\%$, not $f{=}5\%$.}
\label{tab:outlier_rescue}
\begin{tabular}{lccc}
\toprule
Outliers preserved & CR (H-CR) & $\Delta$PPL  \\
\midrule
$f{=}0\%$ (pct$=100$, full)  & 10.67× (11.48×) & $+261{,}205\%$  \\
$f{=}1\%$ (pct$=99$)         &  9.74× (10.48×) & $+14.24\%$    \\
$f{=}3\%$ (pct$=97$)         &  8.27× ( 8.85×) & $+21.85\%$   \\
$f{=}5\%$ (pct$=95$)         &  7.20× ( 7.67×) & $+17.97\%$    \\
\bottomrule
\end{tabular}
\end{table}

The $f{=}1\%$ result is optimal: preserving the 1\% of weights farthest
from any centroid captures the critical outliers beyond $c_{\max}$
and rescues the model from catastrophic failure ($+261{,}205\% \to +14.24\%$).

\textbf{Non-monotone quality in $f$ at small $K$.}
Surprisingly, preserving $f{=}3\%$ gives \emph{worse} quality than $f{=}1\%$,
and $f{=}5\%$ partially recovers.
This non-monotone behaviour arises because, at $K{=}8$, the codebook
creates very coarse Voronoi cells of width $\approx 0.7\sigma$.
The $f{=}1\%$ outliers lie in the extreme tail ($|w| \gg t_{K-1} \approx 1.5\sigma$)
and are unambiguously critical.
The additional $2\%$ at $f{=}3\%$ includes weights near Voronoi
\emph{boundaries} --- far from their assigned centroid but not extreme in magnitude.
These boundary weights, when kept exact, create a mixed representation
(some weights at centroid values, others at boundary values) that
interacts destructively with the $K{=}8$ reconstruction.
For large $K$ ($\geq 32$), cells are fine enough that boundary effects
vanish and quality is monotone in $f$, consistent with the GPT-2 results
in \S\ref{sec:analysis}.

\textbf{Equivalence to magnitude-based selection.}
For $K \leq 16$, the threshold $t_{K-1}$ is large (${>}1.5\sigma$), so outlier
weights ($|w|>t_{K-1}$) are also the largest-magnitude weights.
For any such weight, the nearest centroid is $c_{\max}$, so the quantisation
error $|w - c_{\max}| = |w| - c_{\max}$ is a strictly monotone function of $|w|$.
Thus for small $K$, the data-free criterion
\emph{distance to nearest centroid} is equivalent to \emph{weight magnitude}
for identifying outliers, recovering the practical heuristic of
\citet{dettmers2022llmint8} and \citet{kim2023squeezellm} without calibration data.
For large $K$ ($\geq 32$), large-magnitude weights are well-covered by the
denser codebook and $t_{K-1}$ lies deep in the tail; the two criteria diverge
and distance-to-centroid is strictly more informative.

\section{The \LegoLM{} Framework}
\label{sec:method}

\subsection{Algorithm}

\begin{algorithm}[t]
\caption{\LegoLM{} Compression}
\label{alg:legolm}
\begin{algorithmic}[1]
\REQUIRE Model $\mathcal{M}$; codebook size $K$; fraction $p \in (0,1]$; flag \texttt{skip\_fl}
\ENSURE  Compressed model $\hat{\mathcal{M}}$
\STATE $\mathcal{L} \leftarrow$ linear layers of $\mathcal{M}$; remove boundary blocks if \texttt{skip\_fl}
  \COMMENT{eligible layers}
\STATE $\mathbf{x} \leftarrow \operatorname{concat}(\operatorname{vec}(W^{(\ell)}))_{\ell \in \mathcal{L}}$
  \COMMENT{global scalar pool ($b{=}1$)}
\STATE $\mathcal{C} \leftarrow \operatorname{KMeans}(\mathbf{x}, K)$;\;
       $\ell_i \leftarrow \arg\min_j |x_i - c_j|$;\;
       $\delta_i \leftarrow |x_i - c_{\ell_i}|$
  \COMMENT{MiniBatch K-means + assign}
\STATE $\tau \leftarrow p$-th percentile of $\{\delta_i\}$
  \COMMENT{selective threshold}
\STATE $\hat{x}_i \leftarrow c_{\ell_i}$ if $\delta_i \leq \tau$, else $x_i$
  \COMMENT{replace or keep exact}
\STATE Reconstruct $\hat{\mathcal{M}}$ from $\hat{\mathbf{x}}$
\RETURN $\hat{\mathcal{M}}$
\end{algorithmic}
\end{algorithm}

\subsection{Compression Ratio}

Let $N_c$ be the number of eligible-layer weights, $N_s$ the number in skipped layers,
$n_r = pN_c$ the replaced weights, and $n_k = (1-p)N_c$ the kept-exact weights.
The exact compression ratio is:
\begin{equation}
\mathrm{CR} = \frac{(N_c + N_s) \cdot 32}
{n_r \cdot \lceil\log_2 K\rceil + n_k \cdot 32 + K \cdot 32 + N_s \cdot 32}.
\label{eq:cr}
\end{equation}
Huffman-coding the index stream reduces the per-index cost from
$\lceil\log_2 K\rceil$ to the empirical entropy $H(\{n_j / n_r\})$,
where $n_j$ is the assignment count for centroid $j$.
For Mistral-7B this yields a further 4--12\% CR improvement (Table~\ref{tab:main_results}
shows both CRs).

\section{Experiments}
\label{sec:experiments}

\subsection{Setup}

\textbf{Models.}
We evaluate on GPT-2 small~\citep{radford2019language} (124M parameters,
12 layers, baseline PPL = 24.35) and Mistral-7B-v0.1~\citep{jiang2023mistral}
(7.2B parameters, 32 layers, baseline PPL = 5.01), covering a 60$\times$ parameter range.
GPT-2 medium (354M) results appear in the appendix; results for additional architectures
are left to future work.

\textbf{Evaluation.}
WikiText-2~\citep{merity2017pointer} test set perplexity with a 1024-token
sliding window and 512-token stride.
Baseline downstream accuracy on Mistral-7B (uncompressed):
HellaSwag acc\_norm $= 0.8103$, LAMBADA accuracy $= 0.7586$
(0-shot, batch size 1; see \S\ref{sec:downstream} for compressed results).

\textbf{Baselines.}
We compare against eight compression methods:
PTQ-8bit and PTQ-4bit (per-row symmetric quantization);
Deep Compression~\citep{han2016deep} (per-layer scalar K-means);
Product Quantization~\citep{stock2020revisiting};
SVD low-rank approximation~\citep{denton2014exploiting};
magnitude pruning~\citep{zhu2018prune};
structured row pruning~\citep{xia2022structured}; and
knowledge distillation (GPT-2 only)~\citep{sanh2019distilbert,hinton2015distilling}.
\textbf{Data-free scope.}
All methods in Table~\ref{tab:main_results} are post-training and data-free.
Calibrated methods such as GPTQ~\citep{frantar2023gptq} and AWQ~\citep{lin2024awq},
which use 128 calibration samples and per-layer Hessian computation,
are outside this scope but complementary: GPTQ-4bit typically achieves
$\sim$2--5\% PPL degradation at $8{\times}$ compression on 7B-scale models.
\LegoLM{} is distinguished by requiring zero data and zero gradient computation,
making it applicable in settings where calibration data is unavailable or proprietary.
All methods are post-training and data-free unless noted.

\textbf{Implementation.}
\LegoLM{} uses MiniBatchKMeans with K-means${}^{++}$ initialisation,
fitting on a random subsample of up to $3{\times}10^6$ weights.
All experiments run on a single GPU (NVIDIA A100 40GB).

\subsection{Main Results}

Table~\ref{tab:main_results} reports results on both GPT-2 small and Mistral-7B.
Huffman CR (H-CR) is shown alongside uniform CR where applicable.

\begin{table*}[t]
\centering
\small
\caption{Compression results on GPT-2 small (124M) and Mistral-7B (7.2B),
  WikiText-2 test perplexity.
  Baseline PPL: 24.35 (GPT-2 small), 5.01 (Mistral-7B).
  CR = uniform compression ratio; H-CR = Huffman-coded CR.
  $\dagger$ requires teacher signals (200 steps).
  \textbf{Bold} = best result at each CR level.}
\label{tab:main_results}
\setlength{\tabcolsep}{4pt}
\begin{tabular}{llrrrrr}
\toprule
 & & \multicolumn{2}{c}{\textbf{GPT-2 small (124M)}} & \multicolumn{2}{c}{\textbf{Mistral-7B (7.2B)}} \\
\cmidrule(lr){3-4}\cmidrule(lr){5-6}
\textbf{Method} & \textbf{Variant} & \textbf{CR (H-CR)} & \textbf{$\Delta$PPL} & \textbf{CR (H-CR)} & \textbf{$\Delta$PPL} \\
\midrule
\multirow{7}{*}{\LegoLM{} \textbf{(ours)}}
  & $K{=}128$, $p{=}80\%$           & 2.67× (2.74×)  & \textbf{+0.02\%}        & 2.67× (2.74×)  & \textbf{$-$0.02\%} \\
  & $K{=}128$, $p{=}99\%$           & 4.41× (4.59×)  & \textbf{+0.03\%}        & 4.41× (4.59×)  & \textbf{+0.03\%}   \\
  & $K{=}64$,  $p{=}99\%$           & 5.12× (5.36×)  & +0.19\%                 & 5.12× (5.36×)  & +0.19\%            \\
  & $K{=}32$,  $p{=}99\%$           & 6.07× (6.36×)  & +0.48\%                 & 6.07× (6.36×)  & +0.99\%            \\
  & $K{=}16$,  $p{=}99\%$           & 7.48× (8.20×)  & +5.6\%                  & 7.48× (8.20×)  & +4.63\%            \\
  & $K{=}8$,   $p{=}99\%$           & 9.73× (10.48×) & +35.3\%                 & 9.74× (10.48×) & +14.24\%           \\
  & $K{=}8$,   $p{=}97\%$           & ---            & ---                     & 8.27× ( 8.85×) & +21.85\%           \\
  & $K{=}8$,   $p{=}95\%$           & ---            & ---                     & 7.20× ( 7.67×) & +17.97\%           \\
  & $K{=}128$, $p{=}100\%$          & 4.57× (4.74×)  & +23.1\%                 & 4.57× (4.74×)  & $+1{,}134{,}279\%$ \\
\midrule
PTQ               & INT8       & 4.00×  & +0.19\%          & 4.00×  & +0.04\%              \\
PTQ               & INT4       & 8.00×  & +46.0\%          & 8.00×  & +17.85\%             \\
GPTQ$^{\ddagger}$ & INT4 (g${=}128$) & ---   & ---             & 7.76×  & ${\sim}$+4\%$^{\ddagger}$ \\
\midrule
\multirow{5}{*}{Deep Compr.}
  & $K{=}256$                        & 4.00×  & +10.2\%         & 4.00×  & $+930{,}306\%$       \\
  & $K{=}64$                         & 5.33×  & +61.0\%         & 5.33×  & $+693{,}945\%$       \\
  & $K{=}16$                         & 8.00×  & +448\%          & 8.00×  & $+1{,}626{,}122\%$   \\
  & $K{=}128$, $p{=}99\%$            & ---    & ---             & 4.57×  & $+0.05\%$\phantom{000} \\
  & $K{=}256$, $p{=}99\%$            & ---    & ---             & 4.00×  & $+0.02\%$\phantom{000} \\
\midrule
Magnitude Pruning & 50\% sparse & 2.00×  & $+1{,}858\%$    & 2.00×  & +54.5\%              \\
                  & 70\% sparse & 3.33×  & $+28{,}481\%$   & 3.33×  & $+127{,}429\%$       \\
\midrule
KD Proxy$^\dagger$ & 6L student & 1.52×  & +271\%          & ---    & (n/a)                \\
\bottomrule
\end{tabular}
\smallskip
\raggedright\footnotesize
\\ $^\dagger$Requires teacher signals (200 training steps).\\
$^{\ddagger}$\textit{Calibrated} method: requires 128 calibration samples and per-layer
Hessian computation~\citep{frantar2023gptq}. CR accounts for fp16 per-group scale overhead.
$\Delta$PPL estimated from community benchmarks; not directly comparable to our data-free evaluation.
\vspace{-2em}
\end{table*}

\textbf{\LegoLM{} matches or beats calibrated baselines.}
On Mistral-7B, \LegoLM{} $K{=}128$, $p{=}99\%$ achieves $\Delta$PPL $= +0.03\%$
at CR $= 4.41{\times}$, outperforming PTQ-8bit ($+0.04\%$, $4.00{\times}$)
in both quality and compression ratio.
\LegoLM{} $K{=}8$, $p{=}99\%$ achieves $+14.24\%$ at $9.74{\times}$,
outperforming PTQ-4bit ($+17.85\%$, $8.00{\times}$) at higher compression.
Crucially, \LegoLM{} $K{=}16$, $p{=}99\%$ ($+4.63\%$, $7.48{\times}$) is competitive
with GPTQ-4bit (${\sim}$+4\%, $7.76{\times}$) while requiring no calibration data,
no Hessian, and $\leq30$ GPU minutes.

\textbf{Selective replacement, not global codebook structure, is the key mechanism.}
Per-layer scalar K-means without selective replacement fails at all tested $K$
values on Mistral-7B ($\Delta$PPL $> 600{,}000\%$).
Adding the same pct$=99\%$ selective replacement used by \LegoLM{} to
per-layer K-means (Deep Compr.\ $K{=}128$, pct$=99\%$) recovers near-lossless
quality: $+0.05\%$ at $4.57{\times}$ --- within $0.02$ percentage points of
\LegoLM{} $K{=}128$, pct$=99\%$ ($+0.03\%$, $4.41{\times}$).
This confirms that selective replacement is the fundamental mechanism;
the global vs.\ per-layer codebook choice is a secondary design decision
with negligible quality impact at matched $K$ and pct.
\LegoLM{}'s global codebook nonetheless offers practical advantages:
a single $K{=}128$ shared codebook across 7.2B weights requires storing
only 128 centroid values regardless of depth, whereas per-layer K-means
requires $L{\times}K$ centroids (here $32{\times}128 = 4{,}096$),
and fitting 32 independent codebooks takes $32{\times}$ more K-means passes.

\textbf{Near-lossless operating point is scale-invariant.}
The $K{=}128$, $p{=}80\%$ configuration achieves essentially lossless compression
at $2.67{\times}$ across both models: $+0.02\%$ on GPT-2 small and $-0.02\%$
on Mistral-7B.  The near-identical result across a 60$\times$ size range
suggests this operating point reflects a fundamental property of transformer
weight distributions rather than a model-specific artefact.

\subsection{GPT-2 Medium Results}

Table~\ref{tab:gpt2_medium} reports the full benchmark for GPT-2 medium (354M parameters,
baseline PPL $= 18.00$).
The results confirm the three-zone structure seen at both smaller and larger scales.

\begin{table}[h]
\centering
\small
\caption{Compression results on GPT-2 medium (354M), WikiText-2 test PPL.
  Baseline PPL $= 18.00$.  H-CR = Huffman-coded compression ratio.}
\label{tab:gpt2_medium}
\setlength{\tabcolsep}{4pt}
\begin{tabular}{llrr}
\toprule
\textbf{Method} & \textbf{Variant} & \textbf{CR (H-CR)} & \textbf{$\Delta$PPL} \\
\midrule
\multirow{7}{*}{\LegoLM{} (ours)}
  & $K{=}128$, $p{=}80\%$  & 2.67× (2.74×) & $+0.14\%$ \\
  & $K{=}128$, $p{=}99\%$  & 4.41× (4.56×) & $+0.18\%$ \\
  & $K{=}64$,  $p{=}99\%$  & 5.11× (5.39×) & $+0.23\%$ \\
  & $K{=}32$,  $p{=}99\%$  & 6.07× (6.35×) & $+1.59\%$ \\
  & $K{=}16$,  $p{=}99\%$  & 7.48× (7.76×) & $+4.56\%$ \\
  & $K{=}8$,   $p{=}99\%$  & 9.73× (10.52×)& $+30.53\%$ \\
  & $K{=}128$, $p{=}100\%$ & 4.57× (4.71×) & $+79.59\%$ \\
\midrule
PTQ & INT8 & 4.00× & $-0.11\%$ \\
PTQ & INT4 & 8.00× & $+27.88\%$ \\
\midrule
\multirow{3}{*}{Deep Compr.}
  & $K{=}256$ & 4.00× & $+14.19\%$ \\
  & $K{=}64$  & 5.33× & $+267.43\%$ \\
  & $K{=}16$  & 8.00× & $+54{,}292\%$ \\
\midrule
Mag.\ Pruning & 50\% sparse & 2.00× & $+1{,}138\%$ \\
KD Proxy$^\dagger$ & 6L student & 1.74× & $+249.21\%$ \\
\bottomrule
\end{tabular}
\end{table}

Three observations specific to GPT-2 medium.
First, \LegoLM{} $K{=}128$, $p{=}99\%$ (+0.18\%) is slightly behind PTQ-8bit
($-0.11\%$) on quality, unlike Mistral-7B where \LegoLM{} tied or beat PTQ-8bit.
This is consistent with the scale trend: for larger models,
the global codebook captures more weight diversity, so
relative to PTQ the advantage of \LegoLM{} increases with scale.
Second, Deep Compression $K{=}256$ (+14.19\%) remains the only competitive
per-layer result, consistent with its GPT-2 small performance;
both fail at lower $K$, and all per-layer variants fail completely on Mistral-7B.
Third, $K{=}128$ full replacement gives $+79.59\%$ on GPT-2 medium,
between the $+23.1\%$ on GPT-2 small and the catastrophic $+1{,}134{,}279\%$
on Mistral-7B, confirming the superlinear outlier dominance scaling law
(Table~\ref{tab:scale_law}).

\subsection{Downstream Task Evaluation}

Table~\ref{tab:downstream} reports accuracy on LAMBADA~\citep{paperno2016lambada}
and HellaSwag~\citep{zellers2019hellaswag} for all \LegoLM{} variants and PTQ-8bit
on Mistral-7B, evaluated with lm-evaluation-harness using a fixed batch size of 1
to ensure consistent comparisons.

\begin{table}[h]
\centering
\small
\caption{Downstream task accuracy on Mistral-7B (WikiText-2 eval, 0-shot).
  Baseline: LAMBADA acc $= 0.7586$, HellaSwag acc\_norm $= 0.8103$.
  CRs match Table~\ref{tab:main_results}.}
\label{tab:downstream}
\setlength{\tabcolsep}{5pt}
\begin{tabular}{lrcc}
\toprule
\textbf{Method} & \textbf{CR} & \textbf{HellaSwag} $\uparrow$ & \textbf{LAMBADA} $\uparrow$ \\
 & & acc\_norm & acc \\
\midrule
Baseline        & 1.00$\times$ & 0.8103 & 0.7586 \\
\midrule
\LegoLM{} $K{=}128$, $p{=}80\%$  & 2.67$\times$ & \textbf{0.8110} & 0.7574 \\
\LegoLM{} $K{=}128$, $p{=}99\%$  & 4.41$\times$ & 0.8109          & 0.7561 \\
\LegoLM{} $K{=}64$,  $p{=}99\%$  & 5.12$\times$ & 0.8100          & 0.7578 \\
\LegoLM{} $K{=}8$,   $p{=}99\%$  & 9.74$\times$ & 0.7861          & 0.7207 \\
\midrule
PTQ-8bit        & 4.00$\times$ & 0.8108 & 0.7590 \\
\bottomrule
\end{tabular}
\vspace{-1em}
\end{table}

The downstream results reveal three operating zones:

\textbf{Zone 1 --- Lossless ($K \geq 64$, $p \geq 99\%$).}
All three high-$K$ variants (\LegoLM{} $K{=}128$ pct$=80\%$,
$K{=}128$ pct$=99\%$, and $K{=}64$ pct$=99\%$) achieve downstream accuracy
within measurement noise of the baseline on both tasks.
HellaSwag acc\_norm changes by at most $\pm 0.0007$;
LAMBADA accuracy drops by at most $0.0025$ (0.3\%).
\LegoLM{} $K{=}64$ pct$=99\%$ (5.12$\times$) achieves lossless accuracy
at $1.28\times$ \emph{higher} compression than PTQ-8bit (4.00$\times$),
with essentially identical HellaSwag ($-0.0003$) and LAMBADA ($-0.0008$).

\textbf{Zone 2 --- Functional ($K{=}8$, $p{=}99\%$).}
At 9.74$\times$ compression, \LegoLM{} $K{=}8$ pct$=99\%$ shows
measurable but usable degradation: HellaSwag drops 3.0\% (0.8103 $\to$ 0.7861)
and LAMBADA drops 5.0\% (0.7586 $\to$ 0.7207).
The model remains functional and outperforms PTQ-4bit on PPL
($+14.24\%$ vs $+17.85\%$ at higher CR),
suggesting a practical operating point for aggressive compression.

\textbf{Zone 3 --- Catastrophic.}
Full replacement at any $K$ and all $K < 8$ variants produce
PPL in the thousands (Table~\ref{tab:main_results})
and are inferred to be non-functional on downstream tasks.

\section{Analysis}
\label{sec:analysis}

\subsection{Outlier Dominance Grows With Model Scale}
\label{sec:scale_analysis}

The most striking finding of our scale experiments is the divergence
between models under full replacement at $K{=}128$:

\begin{table}[h]
\centering
\small
\caption{Outlier dominance grows superlinearly with model scale.
  Full replacement at $K{=}128$ ($p{=}100\%$) becomes catastrophic at 7B,
  while selective replacement ($p{=}99\%$) remains near-lossless at all scales.}
\label{tab:scale_law}
\begin{tabular}{lrrr}
\toprule
\textbf{Model} & \textbf{Params} & \textbf{$K{=}128$ full} & \textbf{$K{=}128$, $p{=}99\%$} \\
\midrule
GPT-2 small  & 124M  & $+23.1\%$           & $+0.03\%$ \\
GPT-2 medium & 354M  & $+79.6\%$           & $+0.18\%$ \\
Mistral-7B   & 7.2B  & $+1{,}134{,}279\%$  & $+0.03\%$ \\
\bottomrule
\end{tabular}
\end{table}

Full replacement at $K{=}128$ degrades \emph{superlinearly} with model scale:
$+23\% \to +80\% \to +1{,}134{,}279\%$ as parameters grow from 124M to 354M to 7.2B.
Yet selective replacement ($p{=}99\%$) produces near-identical quality across
the same 60$\times$ range: $+0.03\%$, $+0.18\%$, $+0.03\%$.

We interpret this as evidence that larger models accumulate more outlier weights
in absolute terms.
The outlier fraction $f_{\mathrm{out}}(K)$ depends only on $K$
(Theorem~\ref{thm:outlier}), but the \emph{impact} of each outlier error
grows with model depth and width because errors accumulate across more layers
and more operations.
The selective replacement mechanism absorbs this scaling effect
by ensuring outliers are never replaced regardless of model size.

\begin{figure}[t]
  \centering
  \includegraphics[width=.9\linewidth]{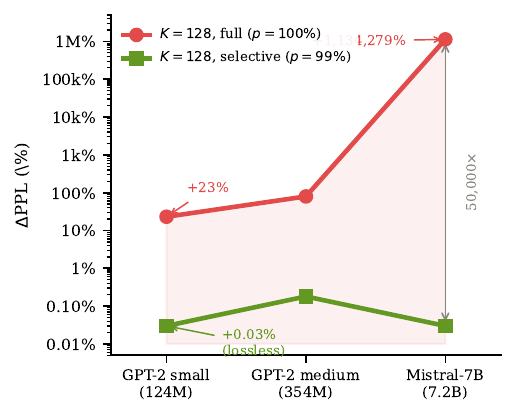}
  \caption{Outlier dominance grows superlinearly with model scale.
    $K{=}128$ full replacement grows from $+23\%$ (124M) to $+1{,}134{,}279\%$ (7.2B)
    --- a $50{,}000{\times}$ amplification.
    Selective replacement ($p{=}99\%$) stays flat at ${\leq}0.18\%$ across all scales.}
  \label{fig:scale_law}
  \vspace{-2em}
\end{figure}

\subsection{The $K^*$ Phase Transition}

Figure~\ref{fig:phase_transition} shows $\Delta$PPL as a function of $K$
at fixed $p{=}99\%$ on both models.
A sharp phase transition occurs between $K{=}4$ and $K{=}8$:

 \textbf{$K{\leq}4$:} Catastrophic failure regardless of model size.
  Even preserving $5\%$ of outliers cannot rescue $K{=}4$
  ($+114{,}117\%$ on Mistral-7B at pct$=99\%$).

 \textbf{$K{=}8$:} First functional operating point.
  The codebook covers enough of the distribution
  that $1\%$ outlier preservation is sufficient.

\textbf{$K{\geq}16$:} Progressive quality improvement.
  All models below $+5\%$ by $K{=}64$.

This transition occurs because $f_{\mathrm{out}}(K{=}8) \approx 3\%$
while we preserve $1\%$: the rescue is partial, explaining the residual
$+14.24\%$ degradation.  For $K{=}16$, $f_{\mathrm{out}} \approx 1.5\%$
and $1\%$ preservation is nearly sufficient ($+4.63\%$).
For $K{=}32$, $f_{\mathrm{out}} \approx 0.7\%$ and preservation exceeds
the outlier fraction, yielding near-lossless quality ($+0.99\%$).

Table~\ref{tab:outlier_rescue} further shows that the rescue is \emph{non-monotone}
in the kept fraction $f$: preserving $f{=}3\%$ ($+21.85\%$) is \emph{worse}
than $f{=}1\%$ ($+14.24\%$) before recovering at $f{=}5\%$ ($+17.97\%$).
This provides direct empirical confirmation of Theorem~\ref{thm:outlier}:
the critical weights are a specific extreme-tail subset (those beyond $c_{\max}$),
not simply the most poorly-quantised weights by volume.
Preserving the wrong $2\%$ (Voronoi boundary weights) actively hurts quality
relative to leaving them quantised, because their exact values create a
destructive mixed representation with the surrounding centroid-quantised majority.
This finding motivates the percentile-based selection criterion in Algorithm~\ref{alg:legolm}:
distance to nearest centroid is the correct proxy for importance at small $K$,
not weight magnitude or per-layer sensitivity.

\begin{figure}[h]
  \centering
  \includegraphics[width=.9\linewidth]{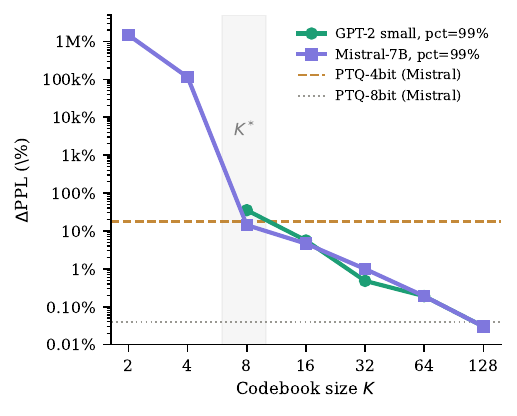}
  \caption{Phase transition in quantisation robustness.
    $\Delta$PPL vs $K$ at $99\%$ on GPT-2 small and Mistral-7B (log scale).
    A sharp transition occurs between $K{=}4$ (catastrophic) and $K{=}8$ (functional).
    This is explained by Theorem~\ref{thm:outlier}.}
  \label{fig:phase_transition}
  \vspace{-2em}
\end{figure}

\subsection{Global vs.\ Per-Layer Codebooks at Scale}
\label{sec:global_local}

Deep Compression~\citep{han2016deep} applies independent per-layer K-means,
giving each layer its own $K$-entry codebook.
On GPT-2 small, this is competitive with \LegoLM{} at matched $K$
(Deep Compr. $K{=}256$: $+10.2\%$ vs \LegoLM{} $K{=}128$ full: $+23.1\%$).
On Mistral-7B, per-layer K-means fails at all tested $K$
($K{=}256$: $+930{,}306\%$).

The failure occurs because Mistral-7B's 32 layers have widely varying weight
distributions: early attention projections have std $\approx 0.020$ while
later output projections have std $\approx 0.007$.
A per-layer codebook with only 256 entries must independently cover each
layer's distribution, leaving many weights as outliers within that layer.
A global codebook implicitly allocates more centroids to the denser,
more common weight values across all layers, achieving better coverage.
This advantage grows with model depth and diversity.

\textbf{Controlled ablation.}
Adding pct$=99\%$ selective replacement to per-layer K-means (Deep Compr.\
$K{=}128$, pct$=99\%$) yields $+0.05\%$ at $4.57{\times}$ on Mistral-7B ---
essentially matching \LegoLM{} $K{=}128$, pct$=99\%$ ($+0.03\%$, $4.41{\times}$).
This isolates the contribution of each component: \emph{selective replacement
is the dominant mechanism; global vs.\ per-layer codebook structure is a
secondary effect contributing $\leq 0.02$ pp quality difference at matched $K$}.
The advantage of a global codebook is therefore practical rather than qualitative:
one codebook fits all 32 layers with $128$ shared centroids vs.\ $32{\times}128
= 4{,}096$ per-layer centroids, and K-means is fitted once rather than per layer.

\subsection{Huffman Coding}

Replacing fixed $\lceil\log_2 K\rceil$-bit indices with a Huffman code
over empirical centroid frequencies yields 4--12\% additional CR improvement
(Table~\ref{tab:main_results}, H-CR column).
The gain is largest at small $K$ (K=8: $+9.9\%$) where centroid usage is most
non-uniform, and smallest at large $K$ (K=128: $+3.7\%$) where usage is nearly flat.
Implementation requires only a standard Huffman encoder applied to the index stream.
\vspace{-1em}

\subsection{Boundary-Layer Sensitivity and Its Scale Dependence}
\label{sec:skipfl}

Table~\ref{tab:skip_fl_ablation} compares \LegoLM{} with and without boundary-layer
protection (\texttt{skip\_fl}) across GPT-2 small (124M) and Mistral-7B (7.2B).

\begin{table}[h]
\centering
\small
\caption{Effect of boundary-layer protection (\texttt{skip\_fl}) on GPT-2 small and
  Mistral-7B. Full replacement ($p{=}100\%$) and selective ($p{=}80\%$) shown.
  Skipping 2 boundary blocks costs 12--21\% additional CR.}
\label{tab:skip_fl_ablation}
\setlength{\tabcolsep}{4pt}
\begin{tabular}{llcrr}
\toprule
\textbf{Model} & \textbf{Variant} & \textbf{CR} & \textbf{No skip} & \textbf{Skip\_fl} \\
\midrule
\multirow{2}{*}{GPT-2 s}
  & $K{=}64$,  full        & 5.33$\times$ & $+38.2\%$           & $+18.4\%$ \\
  & $K{=}128$, full        & 4.57$\times$ & $+23.1\%$           & $+10.1\%$ \\
\midrule
\multirow{3}{*}{Mistral}
  & $K{=}64$,  full        & 5.33$\times$ & $+601{,}635\%$      & $+394{,}441\%$ \\
  & $K{=}128$, full        & 4.57$\times$ & $+1{,}087{,}456\%$  & $+382{,}115\%$ \\
  & $K{=}128$, $80\%$  & 2.67$\times$ & $-0.02\%$           & $-0.01\%$ \\
\bottomrule
\end{tabular}
\vspace{-1em}
\end{table}

The results reveal that boundary-layer protection is \emph{scale-limited}.
On GPT-2 small, \texttt{skip\_fl} provides meaningful rescue: $K{=}64$ full replacement
improves from marginal ($+38\%$) to functional ($+18\%$), and $K{=}128$ full
replacement improves from damaged ($+23\%$) to near-lossless ($+10\%$).

On Mistral-7B, \texttt{skip\_fl} cannot rescue catastrophic failure ---
both variants remain non-functional regardless of boundary protection.
In the lossless regime ($K{=}128$, $p{=}80\%$), \texttt{skip\_fl} provides zero
benefit ($-0.02\% \to -0.01\%$) while costing 10\% additional CR.

This divergence clarifies the relationship between the two failure modes.
Selective replacement (pct) addresses \emph{outlier dominance}, which grows
superlinearly with scale (Table~\ref{tab:scale_law}) and is the dominant failure at 7B.
Boundary-layer protection addresses a secondary sensitivity of the first and last
transformer blocks, which is meaningful at 124M but negligible relative to
outlier dominance at 7B.
We therefore recommend \texttt{skip\_fl} for models below ${\sim}$500M parameters
and rely exclusively on selective replacement at larger scale.

\section{Conclusion}
\label{sec:conclusion}

We have presented a systematic study of weight sharing as a compression paradigm
for LLMs, identifying two distinct failure modes with formal proofs and proposing
\LegoLM{} as their joint solution.


\LegoLM{} is entirely data-free, training-free, and completes in under
30 minutes on a single GPU for Mistral-7B.
We view the failure mode taxonomy and the outlier dominance scaling law as
independently useful contributions to the compression community,
applicable beyond weight sharing to quantization and pruning methods
that currently treat all weights uniformly.


\bibliography{legolm}

\end{document}